\let\NAT@parse\undefined
 \newtheorem{prop}{Proposition}
 \newtheorem{defn}{Definition}
\newcommand{\RR}{{\mathbb R}}
\newcommand{\EAL}{\end{equation}}
\newcommand{\BAL}{\begin{equation}}
\newcommand{\fixX}{x}
\newcommand{\bodyX}{\text{\sc x}}
\newcommand{\body}[1]{\text{\sc #1}}
\newcommand{\fixY}{y}
\newcommand{\bodyu}{\body{u}}
\newcommand{\action}{*}
 \newcommand{\biggrouplaw}{\bullet}
\newcommand{\BA}{\begin{equation}}
\newcommand{\EA}{\end{equation}}
\title{\LARGE \bf
Invariant filtering for wheeled vehicle localization with unknown wheel radius and unknown GNSS lever arm}
\author{ Paul Chauchat, Axel Barrau, Silvère Bonnabel
    \thanks{P.~Chauchat is with Aix-Marseille Univ, CNRS, LIS, France
	({\tt\small paul.chauchat@lis-lab.fr}). A~Barrau is with OFFROAD. S.~Bonnabel is with MINES Paris PSL, PSL Research University, France.}
}
\begin{document}

\maketitle
\thispagestyle{empty}
\pagestyle{empty}


\begin{abstract}
We consider the problem of observer design for a nonholonomic car  (more generally a wheeled robot)  equipped with wheel speeds with unknown wheel radius, and whose position is measured via a GNSS antenna placed at an unknown position in the car.   In a tutorial and unified exposition, we recall the recent theory of two-frame systems within the field of invariant Kalman filtering. We then show how to adapt it geometrically to address the considered problem, although it seems at first sight out of its scope. This yields an invariant extended Kalman filter having autonomous error equations, and state-independent Jacobians, which is shown to work remarkably well in simulations. The proposed novel construction thus extends the application scope of invariant filtering.
\end{abstract}


\section{Introduction}

Lie group embeddings are now considered as powerful tools in navigation and mobile robotics, see   \cite{park2008kinematic,chirikjian2011stochastic2,barfoot2017robotics} to cite a few. Most notably, the use of $SO(3)$, $SE(2)$ and $SE(3)$ is well established in robotics. These Lie  groups allow for  designing non-linear filters with strong theoretical properties \cite{mahony2008nonlinear, barrau2018annual}, as in the framework of invariant filtering \cite{barrau2017invariant, barrau2018annual,walsli2018invariant} and equivariant filters \cite{van_goor2023eqf}.  However, practical navigation problems usually require estimating additional parameters, and including them as state space variables usually leads to a loss of the strong theoretical properties of invariant filtering  which are related to the fact the Jacobians do not depend on the state estimates.  Despite this, the ``imperfect" invariant extended Kalman filter (IEKF) proves to work well in practice and to retain some of the theoretical properties  \cite{barrau2015non, van2020invariant,hartley2020contact}. Taking into account additional symmetries to augment the state space was proposed for equivariant filtering \cite{fornasier2022equivariant, van_goor2023eqvio}, but losing the independent Jacobians property.

Novel Lie groups beyond the commonly used groups $SO(d)$ and $SE(d)$ have been introduced by the invariant filtering literature,  such as $SE_2(3)$  introduced in \cite{barrau2017invariant,barrau2015non} to gracefully accommodate the inertial measurement unit (IMU) equations, as well as the general group $SE_K(d)$ first introduced in \cite{bonnabel2012symmetries} and later shown to endow the EKF with consistency properties for simultaneous localization and mapping (SLAM) in  \cite{barrau2015SLAM, barrau2015non}. Recently, the Two Frame Group (TFG) structure \cite{barrau2022geometry} was shown to encompass all the latter Lie groups and to provide novel groups and examples.

In this paper,  we consider three related problems pertaining to mobile robotics. Each problem encompasses the previous one, with an additional complexity, making it closer to real applications. 

The first one consists in estimating the position and orientation of a 2D mobile robot or nonholonomic car \cite{deluca1998feedback},  from odometer and position (GPS)  measurements. It  has long been known to fit into the framework of invariant observers, see  \cite{bonnabel2008symmetry}. The second one consists in  the same problem, but where there is an unknown lever arm between the GPS antenna and the midpoint of the rear axle. It has   been shown to be amenable to the recent framework of two-frame systems, allowing for the tools of  invariant Kalman filtering in \cite{barrau2022geometry}. 

Finally, the last problem is more difficult, and novel to the geometric observer literature. It is the same as the previous one, with the additional difficulty that the wheel radius is unknown. It does not seemingly lend itself to the invariant framework, but we show it is amenable to the framework of two-frame systems too, after a proper change of variables, which is our main contribution.   A byproduct of this step-by-step approach based on examples of increasing difficulty, is to provide a tutorial introduction to \cite{barrau2022geometry}.

The paper is structured as follows. Section \ref{sec:intro_problems} introduces the three problems. Sections \ref{sec:basic_prob} and \ref{sec:second_pb} respectively recall how the first problem, and the second one, fit into the TFG framework.   Section \ref{sec:third_pb} shows how to treat both additional parameters thanks to a suitable  change of variables. Section \ref{numeric:sec} presents simulations that illustrate the benefits.

 \section{Three increasingly difficult problems}
\label{sec:intro_problems}

\subsection{First problem (basic problem)}

Consider the classical 2D model of a nonholonomic car or equivalently a wheeled robot, or a unicyle, see e.g., \cite{deluca1998feedback} or  \cite{barrau2017invariant}. The position of the car in 2D  is described  by   the middle point  of the rear wheels axle $\fixX_n \in \RR^2$ , and its orientation  denoted by $\theta_n \in \RR$ and parameterised by the planar rotation matrix $ {R_n}  $ of angle $\theta_n$, that is 
$\label{planar:rot} R_n=\begin{pmatrix}\cos\theta_n & -\sin\theta_n\\\sin \theta_n&\cos\theta_n\end{pmatrix}.$
In the illustration below, the triangle is the car, $\theta_n$ encodes its orientation, and $x_n$ its position.

\tikzset{every picture/.style={line width=0.75pt}} 
\begin{center}
\begin{tikzpicture}[x=0.75pt,y=0.75pt,yscale=-.55,xscale=.55]

\draw  [color={rgb, 255:red, 189; green, 16; blue, 224 }  ,draw opacity=1 ][line width=1.5]  (249,138.97) -- (227.13,177.22) -- (205.68,147.07) -- cycle ;

\draw    (252.91,136.18) -- (311.47,94.21) ;

\draw    (257.93,150.9) -- (330.67,151.33) ;

\draw    (332,145.33) .. controls (332.64,127.96) and (327.7,112.46) .. (317.17,98.21) ;
\draw [shift={(316,96.67)}, rotate = 412.31] [color={rgb, 255:red, 0; green, 0; blue, 0 }  ][line width=0.75]    (10.93,-3.29) .. controls (6.95,-1.4) and (3.31,-0.3) .. (0,0) .. controls (3.31,0.3) and (6.95,1.4) .. (10.93,3.29)   ;

\draw (202.33,171) node [scale=1,color={rgb, 255:red, 189; green, 16; blue, 224 }  ,opacity=1 ]  {$\fixX_n$};
\draw (346,115) node   {$\theta_n $};
\end{tikzpicture}
\end{center} 
The  discrete-time dynamics of this   wheeled robot    write:
\begin{equation}
\label{eq::odo}
\begin{aligned}
{R_{n+1} } & = R_{n} \Omega_n ,  \quad 
\fixX_{n+1} = \fixX_{n} +  R_{n}  \bodyu_n, 
\end{aligned}
\end{equation}where $\Omega_n\in SO(2)$ is the angular rate of the car, that  we assume herein to be measured by a one-axis gyroscope, and  where   $\bodyu_n\in\RR^2$ is a velocity returned by wheel speeds.

We assume the robot to be equipped with a position measurement device, such as a GPS  (more generally GNSS)  in case of a car driving outdoors, and  which provides the world-fixed frame position measurements   
\begin{equation}
\label{eq::lever-arm22}
\fixY_n =h(R_n,x_n):= x_n   \in \RR^2.
\end{equation} 
\textbf{Goal 1}: The goal is to devise an observer to estimate the unknown robot's state $(R_n,x_n)$ from the known inputs $\Omega_n, \bodyu_n$ and position measurements $y_n$.  This is a challenge, as the system is non-linear, due to the presence of the rotation matrix $R_n$ in the state. Our goal is to  cast the problem into the invariant filtering framework. If we manage to do so, an IEKF can be automatically derived, and it comes with several properties. This has already been done, though, in   \cite{bonnabel2008symmetry, barrau2017invariant}. 

\subsection{Second problem (adding one difficulty)}
In practice, the GNSS antenna has no reason to coincide with the midpoint of the rear axle. It can be put anywhere on the robot, and it may thus prove useful to estimate its position in the robot's frame (that we call the lever arm). This spares the user a calibration phase where the position of the GNSS needs to be precisely measured. Besides, estimating it online  allows for accommodating small variations of its position over time due to flexibility or drift. 

The goal of this augmented problem is to estimate the state of a wheeled robot or car under the dynamical model  \eqref{eq::odo} but, instead of \eqref{eq::lever-arm22}, with the measurements
\begin{equation}
\label{eq::lever-arm23}
\fixY_n =h(R_n,x_n):= x_n +R_nX_n  \in \RR^2,
\end{equation}where $X_n\in\RR^2$ represents the lever arm. 
As the lever arm $X_n$ needs to be estimated online, we include it in the state, leading to the following dynamics
\begin{equation}
\label{eq::odo_arm}
\begin{aligned}
{R_{n+1} } & = R_{n} \Omega_n ,  \quad 
\fixX_{n+1} = \fixX_{n} + R_{n}  \bodyu_n, \quad X_{n+1}=X_n.
\end{aligned}
\end{equation}

This is illustrated in the schematic diagram below, where the triangle is the car and the square represents the position measured by the GNSS (in the absence of noise).

\tikzset{every picture/.style={line width=0.75pt}} 

\begin{center}
\begin{tikzpicture}[x=0.75pt,y=0.75pt,yscale=-.65,xscale=.65]

\draw  [color={rgb, 255:red, 189; green, 16; blue, 224 }  ,draw opacity=1 ][line width=1.5]  (249,138.97) -- (227.13,177.22) -- (205.68,147.07) -- cycle ;
\draw [color={rgb, 255:red, 65; green, 117; blue, 5 }  ,draw opacity=1 ]   (232.7,150.56) -- (262.93,89.12) ;
\draw [shift={(263.81,87.32)}, rotate = 476.2] [color={rgb, 255:red, 65; green, 117; blue, 5 }  ,draw opacity=1 ][line width=0.75]    (10.93,-3.29) .. controls (6.95,-1.4) and (3.31,-0.3) .. (0,0) .. controls (3.31,0.3) and (6.95,1.4) .. (10.93,3.29)   ;

\draw  [color={rgb, 255:red, 74; green, 144; blue, 226 }  ,draw opacity=1 ] (267.48,84.72) -- (266.56,91.2) -- (260.14,89.93) -- (261.06,83.45) -- cycle ;
\draw    (252.91,136.18) -- (311.47,94.21) ;

\draw    (257.93,150.9) -- (330.67,151.33) ;

\draw    (332,145.33) .. controls (332.64,127.96) and (327.7,112.46) .. (317.17,98.21) ;
\draw [shift={(316,96.67)}, rotate = 412.31] [color={rgb, 255:red, 0; green, 0; blue, 0 }  ][line width=0.75]    (10.93,-3.29) .. controls (6.95,-1.4) and (3.31,-0.3) .. (0,0) .. controls (3.31,0.3) and (6.95,1.4) .. (10.93,3.29)   ;

\draw (202.33,171) node [scale=1,color={rgb, 255:red, 189; green, 16; blue, 224 }  ,opacity=1 ]  {$\fixX_n$};
\draw (346,115) node   {$\theta_n $};
\draw (225,113.33) node [color={rgb, 255:red, 65; green, 117; blue, 5 }  ,opacity=1 ]  {$X_n$};
\draw (278.67,74.67) node [color={rgb, 255:red, 74; green, 144; blue, 226 }  ,opacity=1 ]  {$\fixY_n=x_n +R_nX_n $};
\end{tikzpicture}
\end{center}

\textbf{Goal 2}: The goal is to devise an observer to estimate the unknown robot's state $(R_n,x_n,X_n)$ from the known inputs $\Omega_n, \bodyu_n$ and position measurements $y_n$. This is a challenge because of the presence of the rotation matrix $R_n$ in the state, and the presence of vector variables defined in different frames (fixed vs mobile). Our goal is to  cast the problem into the invariant filtering framework. This has been done in \cite{barrau2022geometry}. 

\subsection{Third problem (adding yet another difficulty)}
We consider the latter problem,  with the additional difficulty that   the wheel radius be unknown  and needs  be estimated online. 
The  discrete-time dynamics of such a mobile wheeled robot then  write:
$
{R_{n+1} }  = R_{n} \Omega_n ,  \quad 
\fixX_{n+1} = \fixX_{n} + s_nR_{n}  \bodyu_n,\quad s_{n+1}=s_n
$  where $\Omega_n\in SO(2)$ and $\bodyu_n\in\RR^2$ are respectively the car's angular rate and velocity, measured by a gyroscope and wheel speeds. The scalar $s_n>0$ corresponds to a scaling factor, owed to the fact that the wheel radius may be unknown, or it may be known initially and vary over time (for instance due to pressure decreasing in the tires), or there may be wheel slip that induces a mismatch between the wheel's rotation and the car's actual velocity (the linear velocity is overestimated by the wheel speeds, up to an unknown factor). Estimating this scaling   is very relevant in practice, and in navigation applications it is routinely included in the state.

\textbf{Goal 3}: Considering  both the scale $s_n\in\RR$ and  the lever arm $X_n\in\RR^2$ as unknown,  leads to the following dynamics
\begin{equation}
\label{eq::odo3}
\begin{aligned}
{R_{n+1} } & = R_{n} \Omega_n ,  \quad 
\fixX_{n+1} = \fixX_{n} + s_nR_{n}  \bodyu_n,\\ s_{n+1}&=s_n, \quad X_{n+1}=X_n,
\end{aligned}
\end{equation}
along with measurements
\begin{equation}
\label{eq::lever-arm230}
\fixY_n =h(R_n,x_n):= x_n +R_nX_n  \in \RR^2,
\end{equation}where $X_n\in\RR^2$ represents the lever arm. 

The goal is to devise an observer to estimate the unknown robot's (larger) state $(R_n,x_n,s_n,X_n)$ from the known inputs $\Omega_n, \bodyu_n$ and position measurements $y_n$ given by \eqref{eq::lever-arm23} or equivalently  \eqref{eq::lever-arm230}.   Ideally, we would like to cast the problem into the invariant filtering framework. If we manage to do so, an IEKF can be automatically derived, and it comes with a number of powerful properties \cite{barrau2018annual}.

\section{Casting  the first problem into the framework of invariant filtering}
\label{sec:basic_prob}

  Without the unknown scaling factor and lever arm, the problem has long been known to  possess symmetries making it amenable to the invariant observer/filtering framework, as the dynamics are then left-invariant on $SE(2)$ and the output compatible \cite{bonnabel2008symmetry}. In this paper, we use this first known problem to recall a few facts of invariant filtering, but adopting the recent two-frame systems framework of \cite{barrau2022geometry}.
 
\subsection{Group action and group law }

The idea of two-frame groups is to depart from a   Lie group $G$, which serves as a building block to build a group structure on the state space. An important ingredient of this construction is the notion of group action. 

\begin{defn}
    A (left) group action of $G$ on $\RR^d$ is a map $(G,\RR^d)\to \RR^d$ that we denote  as $(g,v)\mapsto g \action v$, and which verifies the two following conditions:
    $$g_1\action (g_2\action v)= (g_1 g_2) \action v, \qquad Id \action v = v$$
\end{defn}
We may at first  define the state space of a (reduced) two-frame system to be of the form $G\times \RR^d$. A state element is then of the form $\chi:=(g_n,x_n)$. The two-frame group (TFG) is a group structure on the state space, that is, a way to combine state elements. For the present state space it is defined as follows. 
\BAL\chi_1\biggrouplaw\chi_2=\begin{pmatrix}
g_{1} \\ \fixX_{1}  
\end{pmatrix}\biggrouplaw \begin{pmatrix}
g_{2} \\ \fixX_{2}  
\end{pmatrix}=\begin{pmatrix}
g_{1}g_2 \\ \fixX_{1} + g_1\action x_2
\end{pmatrix}. 
\label{gpl:eq}\EAL The identity element is $(I_d,0)$, and the inverse is given by $(g^{-1},-g^{-1}\action x)$. Endowed with this structure, the state space of two-frame systems may be identified with the TFG itself. Note that, letting $G=SO(2)$, $g_n=R_n$, and the action being the matrix-vector product, that is $g_n\action x_n=R_nx_n$, we recover the well-known group $SE(2)$. 

\subsection{Error dynamics}

Let us consider the first problem, and view its state space as $SE(2)$. The success of invariant filters for state estimation \cite{bonnabel2009symmetry} relies on the   properties of a non-linear error when passed through the dynamics. The left-invariant error between two solutions $\chi, \hat\chi$ of a system is defined, on $SE(2)$, as
\BAL
E=\hat \chi^{-1}\biggrouplaw\chi =\begin{pmatrix}
\hat R^{-1} R\\
  \hat R^{-1}   (\fixX  -\hat\fixX) 
\end{pmatrix}:=\begin{pmatrix}
E^R\\
 E^x
\end{pmatrix}. \label{erreur1}\EAL
It provides a measure a discrepancy between two elements of the group (that is, between state variables), and a null error $\chi=\hat\chi$ corresponds to $E$ being the identity group element of the TFG. 
Let $E_n=\hat\chi_n^{-1}\biggrouplaw\chi_n$ be the error between two solutions of the dynamical system governed by \eqref{eq::odo} at time $n$, and let us compute the error at step $n+1$, $E_{n+1} = \hat \chi_{n+1}^{-1}\biggrouplaw\chi_{n+1}$ with respect to $E_n$. Computations easily show that 
\begin{align}E_{n+1} 
&=\begin{pmatrix}
\hat R_{n}^{-1} R_n \nonumber\\
\Omega_n^{-1}  \hat R_n^{-1}     (\fixX  -\hat\fixX+R_n\bodyu_n-\hat R_n\bodyu_n) 
\end{pmatrix}\\
&=\begin{pmatrix}
E_n^R\\
\Omega_n^{-1} \big(E_n^x+E_n^R  \bodyu_n-\bodyu_n\big) 
\end{pmatrix}.
\label{eq:se2_dyn_error}\end{align}
This means that the error after one step depends only of the error before and the inputs. It is thus ``autonomous", and does not depend explicitly on $\hat \chi$ and $\chi$: It only depends upon their discrepancy. This autonomy (or state-independence) of the error evolution plays a key role in the theory of invariant filtering, and is the basis of many of the properties of the invariant extended Kalman filter (IEKF) of \cite{barrau2017invariant,barrau2018annual}.

\subsection{Invariant observers: compatible output maps}

For a system defined on the TFG, consider an output map
\BAL y=h(\chi)\label{out:eq}
\EAL providing a partial information about the complete state $\chi$. The notion of compatible output maps of \cite{bonnabel2008symmetry,bonnabel2009symmetry} may be rephrased in the framework of two-frame systems as follows. \begin{defn}[Compatible output]We say an output map is compatible if there exists an action $\action_y:(\chi,y)\mapsto\chi\action_y y$ on the output space, such that for all $\chi_1,\chi_2$ we have
$$h(\chi_1\biggrouplaw\chi_2)=\chi_1\action_yh(\chi_2).$$
\end{defn}
In Lie group theory, $h$ is said to be equivariant. The main interest of such a property is that we may then define an output error (called innovation in the context of filtering) which is a function of the error only. Namely, given a state estimate $\hat\chi$ and a measured output \eqref{out:eq},   let the innovation be 
\BAL Z:=\hat \chi^{-1}\action_y y,\label{innove-moi}\EAL which is computable with the information we have, as it does not require to know the true state $\chi$. We see that owing to the compatibility property, $Z$ is a function of the error only:
$$Z=\hat \chi^{-1}\action_y h(\chi)=h(\hat \chi^{-1}\biggrouplaw\chi)=h(E).$$
This remarkable property is key to maintain  an ``autonomous'' behavior of the error during the update step, that is, when the state is corrected in the light of the measurement. Indeed, in (left) invariant filtering the correction writes
\begin{equation}
    \hat \chi_{n+1}^+ = \hat \chi_{n+1} \bullet L(Z_n)
    \label{eq:iekf_update}
\end{equation}
where $L(\cdot)$ is an arbitrary function. The error then becomes
\begin{equation}
    E_{n+1}^+ = (\hat \chi_{n+1}^+)^{-1} \chi_{n+1} =  L(Z_n)^{-1} \bullet E_{n+1},
    \label{eq:error_update_iekf}
\end{equation}
hence it evolves only depending on itself. The gain function $L$ can be tuned through various methods, either by design to derive strong convergence properties in some specific problems, see \cite{mahony2008nonlinear,mahony2017geometric,sanyal2012attitude,wang2020hybrid, hashim2021gps} or using an approach akin to the Extended Kalman filter (EKF), leading to the invariant EKF (IEKF) \cite{barrau2015non, barrau2017invariant,barrau2018annual} or the Equivariant filter \cite{van_goor2023eqf}. The autonomous evolution of the error is key in any case.

\subsection{Casting Problem 1 into the invariant filtering framework}
 
Besides being known for a long time, see  \cite{bonnabel2008symmetry}, making the problem fit into the invariant framework comes as a straightforward application of the theory of two-frames \cite{barrau2022geometry}. 
\begin{prop}
    The left-invariant error $E$ is autonomous for Problem 1. It evolves autonomously through \eqref{eq::odo} and its associated innovation $Z$ depends only upon itself.
\end{prop}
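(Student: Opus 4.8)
The plan is to prove the two assertions separately: autonomy of the left-invariant error under the prediction \eqref{eq::odo}, and dependence of the innovation on the error alone. For the first, most of the work is already displayed: I would follow the identification $G=SO(2)$ recovering $SE(2)$ noted after \eqref{gpl:eq}, take a true trajectory $\chi_n=(R_n,x_n)$ and an estimated one $\hat\chi_n=(\hat R_n,\hat x_n)$ both driven by the \emph{same} measured inputs $\Omega_n,\bodyu_n$ through \eqref{eq::odo}, substitute into $E_{n+1}=\hat\chi_{n+1}^{-1}\biggrouplaw\chi_{n+1}$, and expand using the inverse $(g^{-1},-g^{-1}\action x)$ and the group law \eqref{gpl:eq}. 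This is exactly the computation leading to \eqref{eq:se2_dyn_error}: the rotation part is $\Omega_n^{-1}\hat R_n^{-1}R_n\Omega_n=E_n^R$ (the conjugation by $\Omega_n$ dropping out because $SO(2)$ is abelian), and the translation part is $\Omega_n^{-1}(E_n^x+E_n^R\bodyu_n-\bodyu_n)$. Since the right-hand side involves only $E_n$ and the known inputs, $E_{n+1}$ is autonomous.

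For the second assertion I would show that the output \eqref{eq::lever-arm22} is compatible in the sense of the Compatible-output Definition, and then invoke the general argument around \eqref{innove-moi}. Concretely, on the output space $\RR^2$ I would propose the action $\chi\action_y y:=x+g\action y$ for $\chi=(g,x)$, verify the two axioms $(I_d,0)\action_y y=y$ and $\chi_1\action_y(\chi_2\action_y y)=(\chi_1\biggrouplaw\chi_2)\action_y y$ (both immediate once one uses that the $SO(2)$-action on $\RR^2$ is linear), and check the equivariance identity $h(\chi_1\biggrouplaw\chi_2)=x_1+g_1\action x_2=\chi_1\action_y h(\chi_2)$. Then $Z=\hat\chi^{-1}\action_y y=h(\hat\chi^{-1}\biggrouplaw\chi)=h(E)=E^x$ is a function of $E$ only, and \eqref{eq:error_update_iekf} shows the corrected error $L(Z_n)^{-1}\biggrouplaw E_{n+1}$ still depends only on $E_{n+1}$, so autonomy is preserved through the update.

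I do not expect a genuine obstacle: the proposition is essentially a repackaging of \eqref{eq:se2_dyn_error} together with the compatible-output machinery. The only points needing care are applying the TFG inverse and product formulas in the correct order when expanding $E_{n+1}$, and observing that $\Omega_n$, being an input common to the true and estimated trajectories, cannot inject any estimate dependence --- in fact it cancels from the rotation error by commutativity of $SO(2)$, which is precisely the structural reason Problem~1 falls so cleanly within the invariant filtering framework.
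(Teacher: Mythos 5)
Your proposal is correct and follows essentially the same route as the paper: the dynamical part is exactly the computation displayed in \eqref{eq:se2_dyn_error} (with the commutativity of $SO(2)$ cancelling the conjugation by $\Omega_n$, which the paper uses implicitly), and the innovation part defines the same output action $\chi\action_y y = x + g\action y$, checks compatibility, and concludes $Z = E^x$. The extra verification of the action axioms and the remark on the update step \eqref{eq:error_update_iekf} are consistent additions but not a different argument.
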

\begin{proof}
\eqref{eq:se2_dyn_error} showed the dynamical part. We only need to focus on the innovation. Let us show that the simple output \eqref{eq::lever-arm22} is compatible. Let $\action_y$ be defined by 
\BAL
\chi\action_y y=\begin{pmatrix}
     g\\x
 \end{pmatrix}\action_y y:=x+g\action y=x+Ry.\label{yaction}
\EAL
Note that $h(\chi)= \chi \action_y 0_2$.
Using \eqref{eq::lever-arm22}, \eqref{gpl:eq} and \eqref{yaction}, we have
\begin{align*}
    h(\chi_1\biggrouplaw\chi_2)&=x_1+g_1\action x_2=x_1+R_1x_2\\
    \chi_1\action_y h( \chi_2)&=\chi_1\action_y x_2=x_1+R_1x_2,
\end{align*}
 proving the compatibility. Finally, the innovation is a function of the error, as we have
$$Z=\hat \chi^{-1}\action_y y=\hat R^{-1}y-\hat R^{-1}\hat x=\hat R^{-1}(x-\hat x) = E^x.$$
\par \vspace{-1.5\baselineskip}
\end{proof}

 \section{Casting the second problem into the framework of invariant filtering}
 \label{sec:second_pb}

The problem of estimating a robot's unkonwn attitude, position, and lever-arm $\chi_n:=(R_n, x_n, X_n)$ has been cast into the invariant filtering framework recently, and has served as a flagship example for the  theory of two-frame systems \cite{barrau2022geometry}. We recall here how this broadens the scope of the theory developed in Section~\ref{sec:basic_prob}, thus making the reader more familiar with this recent theory.

\subsection{Definition of the TFG group structure and actions}
The two-frame system state space is of the form $G \times \RR^d \times \RR^f$
. A state element is of the from $\chi := (g_n, x_n, X_n)$. The group structure of the TFG relies on two group actions of $G$ on $\RR^d$ and $\RR^f$,  denoted by $*_x$ and $*_X$ respectively. The group law is given by \cite{barrau2022geometry}
 \BAL\chi_1\biggrouplaw\chi_2=\begin{pmatrix}
g_{1} \\ \fixX_{1}  \\ X_1
\end{pmatrix}\biggrouplaw \begin{pmatrix}
g_{2} \\ \fixX_{2}  \\ X_2
\end{pmatrix}=\begin{pmatrix}
g_{1}g_2 \\ \fixX_{1} + g_1 \action_x x_2 \\ X_2 + g_2^{-1} \action_X X_1
\end{pmatrix}. 
\label{gpl:tfg}
\EAL
The identity element is $(I_d,0, 0)$, and the inverse is given by $(g^{-1},-g^{-1}\action_x x, - g \action_X X)$.


In the present case, we let $G = SO(2)$, $\RR^d = \RR^f = \RR^2$, and both actions   be the matrix-vector product: $g *_x x = R x$, and $g *_X X = R X$, so that \eqref{gpl:tfg} boils down to
\BAL\chi_1\biggrouplaw\chi_2=\begin{pmatrix}
R_{1} \\ \fixX_{1}  \\ X_1
\end{pmatrix}\biggrouplaw \begin{pmatrix}
R_{2} \\ \fixX_{2}  \\ X_2
\end{pmatrix}=\begin{pmatrix}
R_{1}R_2 \\ \fixX_{1} + R_1 x_2 \\ X_2 + R_2^{-1} X_1
\end{pmatrix}. 
\label{gpl:lever_arm}
\EAL
The invariant error is now given by
 \BAL
E= \begin{pmatrix}
E^R\\
  E^x \\ E^X
\end{pmatrix}
=\hat \chi^{-1}\biggrouplaw\chi =\begin{pmatrix}
\hat R^{-1} R\\
  \hat R^{-1}      (\fixX  -\hat\fixX) \\
  X - R^{-1} \hat R \hat X
\end{pmatrix}.
\label{eq:error_lever_arm}
\EAL 
The action of the TFG on the output space is \cite{barrau2022geometry}
\BAL
\chi *_y y := x + RX + Ry
\label{eq:action_lever_arm}
\EAL and we see that similarly to the previous problem, we have managed to write the new output  \eqref{eq::lever-arm23} as $h(\chi)=\chi*_y 0_2$. 

\subsection{Results}

Dynamics \eqref{eq::odo_arm} can be rewritten
\BAL
 \begin{pmatrix}
R_{n+1} \\ \fixX_{n+1} \\X_{n+1}
\end{pmatrix} = 
\begin{pmatrix}
R_{n} \Omega_n\\
  \fixX_{n} +   R_n  \bodyu_{n}  \\ X_n 
\end{pmatrix}
\label{eq:odo_lever_arm}
\EAL
Adding $X_n$ to the state, albeit constant, has a non negligible impact on the evolution of the error $E$. However, the dynamics still satisfy the group-affine property \cite{barrau2018annual, barrau2022geometry} which ensures autonomous evolution of the error. Indeed, since $E^R, E^x$ are unchanged compared to \eqref{eq:se2_dyn_error}, they evolve identically. Thanks to the commutativity of $SO(2)$, we can check that
$$E_{n+1}^X = X_n - R_n^{-1} \hat R_n \hat X_n = E_n^X$$
Then, we can then check that $h(\chi) = x + RX$ is compatible with the TFG through the action $*_y$.
Indeed, we have
\begin{align}
\label{eq:lever_arm_compatible}
h(\chi_1 \bullet \chi_2) &= x_1 + R_1 x_2 + R_1 R_2 (X_2 + R_2^{-1} X_1) \\
&= x_1 + R_1 X_1 + R_1 x_2 + R_1 R_2 X_2 = \chi_1 *_y h(\chi_2) \nonumber
\end{align}

Therefore, we recover the following result from \cite{barrau2022geometry}
\begin{prop}
    The invariant error evolves autonomously through the dynamics \eqref{eq::odo_arm}, as we have
    \begin{align*}
E_{n+1}=\hat \chi_{n+1}^{-1}\biggrouplaw\chi_{n+1} 
&=\begin{pmatrix}
E_n^R\\
\Omega_n^{-1} \big(E_n^x+E_n^R  \bodyu_n-\bodyu_n\big) \\
E_n^X
\end{pmatrix},\end{align*}
    Moreover, the innovation is a function of the error as
\begin{align*}
Z = \hat\chi^{-1} *_y y &= -\hat R^{-1} \hat  x - \hat R^{-1} \hat R\hat X + \hat R^{-1} \big( x + RX \big) \\
&=\hat R^{-1} (x - \hat x) + (\hat R^{-1} R X - \hat X)\\
&=E^x - (E^{-1})^X
\end{align*}
\end{prop}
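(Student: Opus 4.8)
The plan is to prove both assertions by direct substitution, recycling the computations already displayed in this section. I would first treat the error dynamics. The components $E^R$ and $E^x$ of \eqref{eq:error_lever_arm} are exactly the expressions appearing in the $SE(2)$ problem, and the $R$- and $x$-updates in \eqref{eq::odo_arm} are unchanged from \eqref{eq::odo}; hence their evolution is read off directly from \eqref{eq:se2_dyn_error}. For the lever-arm component I would plug $R_{n+1}=R_n\Omega_n$, $\hat R_{n+1}=\hat R_n\Omega_n$, $X_{n+1}=X_n$ and $\hat X_{n+1}=\hat X_n$ into $E_{n+1}^X = X_{n+1}-R_{n+1}^{-1}\hat R_{n+1}\hat X_{n+1}$, obtaining $X_n-\Omega_n^{-1}R_n^{-1}\hat R_n\Omega_n\hat X_n$, and then use commutativity of $SO(2)$ to cancel the conjugating $\Omega_n$, leaving $E_{n+1}^X=E_n^X$. (Equivalently, this is the group-affine property invoked above, but the one-line substitution is quicker.)

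Next I would handle the innovation. Starting from $\hat\chi^{-1}=(\hat R^{-1},-\hat R^{-1}\hat x,-\hat R\hat X)$ and the noiseless output $y=h(\chi)=x+RX$ (cf.\ \eqref{eq::lever-arm23}), I would evaluate the output action \eqref{eq:action_lever_arm}, i.e.\ the expression $x+RX+Ry$, at $\chi\leftarrow\hat\chi^{-1}$ and $y\leftarrow h(\chi)$; cancelling $\hat R^{-1}\hat R$ yields $Z=\hat R^{-1}(x-\hat x)+(\hat R^{-1}RX-\hat X)$, and the first summand is $E^x$ by the definition in \eqref{eq:error_lever_arm}. The only non-mechanical step is to identify $\hat R^{-1}RX-\hat X$ with $-(E^{-1})^X$: applying the TFG inversion formula $(g,x,X)^{-1}=(g^{-1},-g^{-1}*_x x,-g*_X X)$ to $E=(E^R,E^x,E^X)$ gives $(E^{-1})^X=-E^R E^X=-\hat R^{-1}R\bigl(X-R^{-1}\hat R\hat X\bigr)=-\hat R^{-1}RX+\hat X$, so $-(E^{-1})^X=\hat R^{-1}RX-\hat X$ and therefore $Z=E^x-(E^{-1})^X$.

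I do not expect a genuine obstacle here; the argument is bookkeeping with the group law, its inverse, and the output action. The two points that deserve care are (i) using the ``twisted'' third slot of the TFG inverse correctly — both when inverting $\hat\chi$ and when inverting $E$ — since a sign or a $g$-versus-$g^{-1}$ slip there leads to the wrong closed form; and (ii) invoking commutativity of $SO(2)$ at precisely the right places, as this is exactly what makes $E^R$ and $E^X$ stationary. For a non-abelian $G$ one would instead get $E_{n+1}^R=\Omega_n^{-1}E_n^R\Omega_n$ and a conjugated lever-arm term, and the tidy formula in the statement would not survive verbatim.
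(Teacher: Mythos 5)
Your proof is correct and follows essentially the same route as the paper: $E^R,E^x$ are read off from the $SE(2)$ computation, $E^X_{n+1}=E^X_n$ follows from commutativity of $SO(2)$, and the innovation is obtained by evaluating the output action at $\hat\chi^{-1}$. You additionally verify explicitly that $\hat R^{-1}RX-\hat X=-(E^{-1})^X$ via the TFG inversion formula, a step the paper merely asserts, which is a welcome bit of extra care.
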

 
Having established those points, we have all we need to apply the invariant filtering theory, and we know it will lead to invariant EKFs that come with strong properties \cite{barrau2018annual}.

\section{Casting the third problem into the framework of invariant filtering}
\label{sec:third_pb}

While casting Problems 1 and 2 into the framework of invariant filtering had already beend done, to our knowledge Problem 3 has not been shown to fit into the invariant filtering framework (or in simple terms there are not known alternative state errors that have been shown to evolve autonomously, to date). We believe this is non-trivial, even to the expert, as can be observed by the reader who would attempt  at this stage to come up with an error that verifies autonomous evolution and output compatibility (the end solution looks simple, but only once it has been found). Casting Problem 3 into the framework of invariant filtering, and showing experimentally the benefits, can be considered the main contributions of the present paper. Note that presenting the two latter problems in an unified and pedagogical way is a secondary contribution, which was helpful in preparing the developments to come.

\subsection{A preliminary subproblem as a first step}\label{sec:third_pb12}
 
Let us set aside the lever-arm for now. If we are to include the scaling factor in the model, and to estimate it online, the previous approach is not sufficient. It turns out though, that we can use the theory of two-frame systems developed in Section~\ref{sec:basic_prob} once again, but changing the group $G$.  Indeed, the frame transformation group needs not be limited to a rotation group. It can also include a global scaling, which makes sense for instance if different units (e.g., meters vs feet) are used in the fixed and body frames.  
Note that, including a scale factor using geometric tools was already done in the context of visual navigation \cite{engel2014lsd, bourmaud2015robust,mur2015orb}, and we also proposed it in \cite{chauchat2024} to cope with wheel scaling.

In this first step, we let $G$ be the direct product between $SO(2)$ and $\RR_{>0}$ endowed with standard product of scalars. Hence, an element of $G$ now writes $g=(R,s)$ with $R$ a rotation and $s>0$, and the group composition law writes $(R_1,s_1)\cdot (R_2,s_2)=(R_1R_2,s_1s_2).$ An element of the TFG in this context is thus of the form $\chi = (g, x) = ((R,s), x)$.  Moreover, we define the action of $G$ on the variable $x$ to be given by $(g,x)\mapsto g\action_x x=(R,s)\action_x x:=sRx$, which is easily seen to be an action. This defines a TFG group law, applying \eqref{gpl:eq}, which herein particularizes to \BAL\chi_1\biggrouplaw\chi_2 =\begin{pmatrix}
(R_1R_2,s_1s_2) \\ \fixX_{1} + s_1R_1 x_2
\end{pmatrix}. 
\label{gpl:eq2}\end{equation}The inverse element is given by $\chi^{-1}=((R,s),x)^{-1}=((R^{-1},\frac{1}{s}),-\frac{1}{s}R^{-1}x)$. This group is not new, though, since it corresponds to the group of similitudes $Sim(2)$ \cite{chirikjian2011stochastic2}.

 \subsection{Back to Problem 3}
 
 One could think a simple combination of the use of the TFG as was done in Section \ref{sec:second_pb} and the ideas of the latter subsection to include the scaling as part of the transformation group of frames $G$ will lead to the result we seek, and hence to autonomous   error equations. However, there is a fundamental problem.

 If we try to apply the methodology of Sections \ref{sec:second_pb} and \ref{sec:third_pb12} to the third problem, that is, System \eqref{eq::odo3}-\eqref{eq::lever-arm230}, we need to consider the TFG structure on $(SO(2) \times \RR_{>0}) \times \RR^2 \times \RR^2$, so that an element of the state space is $((R,s),x,X)$. However, a problem arises when trying to define the suitable actions  $*_x, *_X, *_y$ of $SO(2) \times \RR_{>0}$. For instance, the lever-arm-only case of Section \ref{sec:second_pb} used $g *_x x = R x$, while the scale-factor-only   case of Section \ref{sec:third_pb12} used $g *_x x = sRx$. The same applies for the other actions, and one cannot define an action which would lead to  both autonomous error dynamics \emph{and}  innovation being a function of the  error only.

\subsection{Casting the problem into the invariant filtering framework after a suitable change of variables}

We   propose an alternative form that falls into the invariant filtering framework thanks to a change of variable, which will  lead to autonomous errors both in the transformed and in the original variables. Consider the new variable
\BAL
\chi' = ((R,s),x,X'), \mbox{ with } X' = \frac{1}{s} X.
\label{eq:new_var}
\EAL
The system then becomes
\begin{equation}
\label{eq::odo4}
\begin{aligned}
{R_{n+1} } & = R_{n} \Omega_n ,  \quad 
\fixX_{n+1} = \fixX_{n} + s_nR_{n}  \bodyu_n,\\ s_{n+1}&=s_n, \quad X_{n+1}'=X_n'.
\end{aligned}
\end{equation} with measurements
\BAL
y_n= h(\chi') = x_n+s_nR_nX_n'.
\label{eq:output_new_var}
\EAL
Remarkably, this modified system with a down-scaled lever arm gracefully fits the framework built up until now, using the state space $(SO(2) \times \RR_{>0}) \times \RR^2 \times \RR^2$. Indeed, consider the following actions
\begin{align*}
    (R,s) &*_x x = sRx, \quad (R,s) *_X X' = sRX' \\
    \chi' *_y y &= x + sRX' +sRy.
\end{align*}
As there is no ambiguity, since the actions coincide, we will use $*$ to denote both $*_x, *_X$. The dynamics \eqref{eq::odo4} may then rewrite in the form   \eqref{eq:odo_lever_arm} as follows
 \begin{equation} {
 \begin{pmatrix}
(R_{n+1},s_{n+1}) \\ \fixX_{n+1} \\X'_{n+1}
\end{pmatrix} = 
\begin{pmatrix}
(R_{n},s_n)\cdot ( \Omega_n,1)\\
  \fixX_{n} +   (R_{n},s_n) \action  \bodyu_{n}  \\ X'_n 
\end{pmatrix}.} 
\end{equation}
With the new variable $\chi'$, we can replace $R_1, R_2$ with $s_1R_1, s_2 R_2$ in \eqref{eq:lever_arm_compatible}, which boils down to changing the group $G$, and we recover formally exactly the Problem 2. It is then easy to check that the output \eqref{eq:output_new_var} becomes compatible with the action $*_y$. This guarantees that the left-invariant error is autonomous both at propagation and update steps. 
Hereafter we translate the computations in terms of the original variables.

\subsection{An autonomous error in the original variables}
Let us rewrite the error in the original variables, by replacing $X=s X'$ and $\hat X = \hat s \hat X'$. We thus have
\BAL
\begin{pmatrix}
E^g\\ E^x  \\E^X
\end{pmatrix}=\begin{pmatrix}
(E^R,E^s)\\ E^x  \\E^X
\end{pmatrix}:=\begin{pmatrix}
(E^R, E^s) \\ E^x \\  \frac{1}{s} (X -  R^{-1} \hat R \hat X)
\end{pmatrix}.
\label{eq:auto_error}
\EAL
This error is not left-invariant, and does not follow from a TFG group law. Nonetheless, it is autonomous.
\begin{prop}
    The error \eqref{eq:auto_error} has autonomous dynamics, and the innovation depends only upon it.
\end{prop}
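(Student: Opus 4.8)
The plan is to reduce everything to the already-proved Problem~2 by a coordinate change, so that essentially no new computation is needed. The key observation is that, expressed in the variable $\chi'$ of \eqref{eq:new_var}, the system \eqref{eq::odo4}--\eqref{eq:output_new_var} is \emph{formally identical} to Problem~2, with the single modification that the frame group $G=SO(2)$ is replaced by $G=SO(2)\times\RR_{>0}$ and the actions are replaced by the scaled ones $(R,s)*v=sRv$ and $\chi'*_y y=x+sRX'+sRy$. First I would spell this correspondence out termwise: the $g$-dynamics $g_{n+1}=g_n\cdot(\Omega_n,1)$ is a right translation just like $R_{n+1}=R_n\Omega_n$; the position update reads $x_{n+1}=x_n+g_n*\bodyu_n$ exactly as in \eqref{eq:odo_lever_arm}; and $X'_{n+1}=X'_n$ exactly as $X_{n+1}=X_n$. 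Likewise $h(\chi')=x+sRX'=\chi'*_y 0_2$ has the same shape as the Problem~2 output. Since the only structural facts used in Section~\ref{sec:second_pb} are that $G$ is commutative, that the actions are linear in their vector argument, and that they obey the action axioms --- all of which still hold for $SO(2)\times\RR_{>0}$ and the scaled actions --- every step there transfers verbatim.

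Concretely I would (i) redo the two-line check of \eqref{eq:lever_arm_compatible} with $R_i$ replaced by $g_i$ acting through $*$, using $(g_1g_2)*(g_2^{-1}*v)=g_1*v$ and bilinearity, to conclude that $h$ in \eqref{eq:output_new_var} is compatible with $*_y$; and (ii) invoke the group-affine/commutativity argument of Section~\ref{sec:second_pb} to obtain that the left-invariant error $E'=\hat\chi'^{-1}\biggrouplaw\chi'$ built from the group law \eqref{gpl:tfg} (now with the new $G$ and actions) evolves autonomously through \eqref{eq::odo4}, and that its innovation $Z=\hat\chi'^{-1}*_y y$ is a function of $E'$ alone. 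This is precisely the Proposition of Section~\ref{sec:second_pb} applied to the new group.

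It then remains to translate back. Writing out $E'$ from \eqref{gpl:tfg} with $g=(R,s)$ gives $(E')^R=\hat R^{-1}R$, $(E')^s=s/\hat s$, $(E')^x=\hat s^{-1}\hat R^{-1}(x-\hat x)$ and $(E')^X=X'-\tfrac{\hat s}{s}R^{-1}\hat R\hat X'$; substituting $X'=X/s$, $\hat X'=\hat X/\hat s$ into the last component yields exactly $(E')^X=\tfrac1s\bigl(X-R^{-1}\hat R\hat X\bigr)$, i.e. $E'$ coincides componentwise with the error \eqref{eq:auto_error}. Since \eqref{eq:new_var} is a bijection carrying trajectories of \eqref{eq::odo3}--\eqref{eq::lever-arm230} onto trajectories of \eqref{eq::odo4}--\eqref{eq:output_new_var} and preserving the measurement ($x+sRX'=x+RX$), the autonomy of $E'$ and the fact that $Z$ depends only on $E'$ transfer immediately to \eqref{eq:auto_error}; note this does not contradict the remark that \eqref{eq:auto_error} is not itself a TFG group error, because the change of variables is state-dependent. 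Optionally, one can finish by pushing the Problem~2 recursions and $Z=E^x-(E^{-1})^X$ through the substitution to get closed forms for $E_{n+1}$ and $Z$ in the original variables.

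The genuinely delicate point --- the one I would watch --- is the combination of step (i) with the bookkeeping of the last paragraph: one must verify that mixing $*_x$ and $*_X$ (literally the same map $v\mapsto sRv$ here) in $h(\chi'_1\biggrouplaw\chi'_2)$ still collapses correctly, and, in the translation, that the factor $1/s$ appearing in $E^X$ is precisely the one generated by the change of variables, so that no residual dependence on $\chi$ or $\hat\chi$ beyond their discrepancy survives. Everything else is routine re-use of Sections~\ref{sec:basic_prob}--\ref{sec:second_pb}.
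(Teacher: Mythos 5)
Your proposal is correct and follows essentially the same route as the paper: the authors likewise reduce the transformed system to Problem~2 with $G=SO(2)\times\RR_{>0}$ and the scaled actions (``we recover formally exactly the Problem 2''), and then translate back, with the proposition's proof being just the explicit verification that $E^X_{n+1}=E^X_n$ (via $s_{n+1}=s_n$ and commutativity) and that $\hat\chi^{-1}*_y y$ collapses to a function of $(E^R,E^s,E^x,E^X)$ alone. Your componentwise identification of the left-invariant error $E'$ with \eqref{eq:auto_error} under $X'=X/s$ is exactly the bookkeeping the paper performs, so no gap remains.
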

\begin{proof}
    Since $E^R, E^x$ coincide with their counterparts from Section \ref{sec:third_pb}, we only need to focus on $E^X$. Since $s_{n+1} = s_n$, and 2D rotations commute, we have $R_{n+1}^{-1} \hat R_{n+1} = R_n^{-1} \hat R_n$, and thus $E_{n+1}^X = E_n^X$.

    Regarding the innovation, we can see that
\begin{align}
\hat\chi^{-1} *_y y&= -\hat R^{-1}\frac{1}{\hat s}\hat  x-\hat R^{-1} \frac{1}{\hat  s}(\hat R\hat X)+\hat R^{-1}\frac{1}{\hat s}\big(x+RX \big) \nonumber \\
&=\underbrace{\hat R^{-1}\frac{1}{\hat s}(x-\hat x)}_{E^x} + \underbrace{\frac{1}{\hat s}(\hat R^{-1}RX-\hat X)}_{E^X}
\end{align}
\par \vspace{-\baselineskip}
\end{proof}
These properties previously ensured that the errors of Sections~\ref{sec:basic_prob}, \ref{sec:second_pb}, \ref{sec:third_pb} behaved entirely autonomously. However, this relied in part on the form of the update \eqref{eq:iekf_update}. Since there is no group law to define the update here, the update rule for the original variable $\chi$ needs to be clarified. 

To this end we rely on the group law of the new variable $\chi'$. Let the update be $\left(
    (L_R, L_s), L_x, L_{X'}
\right) = L(Z).$ Following the theory of \cite{barrau2022geometry}, the update rule for $\hat \chi'$ writes
$$\hat \chi'^+ =
\begin{pmatrix}
    (\hat R^+, \hat s^+) \\ \hat x^+ \\\hat X'^+
\end{pmatrix}
= \begin{pmatrix}
    (\hat R,\hat s) \cdot (L_R, L_s) \\ \hat x + (\hat R,\hat s) * L_x \\ L_{X'} + (L_R, L_s)^{-1} * \hat X'
\end{pmatrix}$$
The update rule in the original variables follows, using that $\hat X^+ = \hat s^+ \hat X'^+ = \hat s L_s \hat X'^+$:
\BAL
\hat \chi^+ 
= \begin{pmatrix}
    (\hat R,\hat s) \cdot (L_R, L_s) \\ \hat x + (\hat R,\hat s) * L_x \\ \hat s L_s L_{X'} + L_R^{-1} \hat X
\end{pmatrix}
\label{eq:update_old_var}
\EAL
\begin{prop}
    The error update based on the rule \eqref{eq:update_old_var} is autonomous.
\end{prop}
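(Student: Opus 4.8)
The plan is to recognize that the seemingly \emph{ad hoc} error \eqref{eq:auto_error} is nothing but the ordinary left-invariant TFG error of the transformed system \eqref{eq::odo4}--\eqref{eq:output_new_var}, merely re-expressed in the original coordinates, so that autonomy of its update can be imported wholesale from the theory of \cite{barrau2022geometry}. Concretely, I would first establish the identity
\[
E \;=\; \hat\chi'^{-1}\biggrouplaw\chi',
\]
where the right-hand side is computed with the group law \eqref{gpl:tfg} for $G = SO(2)\times\RR_{>0}$ and the actions $*_x,*_X$ introduced just above \eqref{eq:update_old_var}. Substituting $X = sX'$ and $\hat X = \hat s\hat X'$ into $\hat\chi'^{-1}\biggrouplaw\chi'$, the $g$- and $x$-components are immediate, and using that the scalar factors commute (and that $SO(2)$ is abelian) the $X'$-component collapses to $\tfrac{1}{s}\big(X - R^{-1}\hat R\hat X\big) = E^X$. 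This is exactly the manipulation already used in passing from \eqref{eq:error_lever_arm} to \eqref{eq:auto_error}, now read in reverse, so no new computation is really needed.

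Second, I would note that the update \eqref{eq:update_old_var} is, by construction, the image under the bijective change of variables $\chi'\mapsto\chi$ of the canonical left-invariant update $\hat\chi'^{+} = \hat\chi'\biggrouplaw L(Z)$ of \eqref{eq:iekf_update} applied to the transformed system: indeed \eqref{eq:update_old_var} was obtained from the update rule for $\hat\chi'$ precisely by inserting $\hat X^{+} = \hat s^{+}\hat X'^{+} = \hat s L_s\hat X'^{+}$. Since the transformed output \eqref{eq:output_new_var} was shown to be compatible with $*_y$, the innovation satisfies $Z = h(E') = h(E)$, whence the gain $L(Z)$ is a function of $E$ alone, for any choice of the map $L$ (in particular the IEKF gain).

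Third, I would invoke the generic computation behind \eqref{eq:error_update_iekf}: for a left-invariant error and an update of the form $\hat\chi'^{+} = \hat\chi'\biggrouplaw L(Z)$ one has
\[
E'^{+} \;=\; \big(\hat\chi'^{+}\big)^{-1}\biggrouplaw\chi' \;=\; L(Z)^{-1}\biggrouplaw E',
\]
so $E'^{+}$ depends on $E'$ and the inputs only. Finally, re-running the substitution $X = sX'$, $\hat X^{+} = \hat s^{+}\hat X'^{+}$ on $\big(\hat\chi'^{+}\big)^{-1}\biggrouplaw\chi'$ shows that this quantity equals the error \eqref{eq:auto_error} evaluated at $(\hat\chi^{+},\chi)$, i.e. $E^{+} = E'^{+}$; hence $E^{+}$ is a function of $E$ only, which is the claimed autonomy of the update.

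The only delicate point — and the step where a careless argument would go wrong — is the bookkeeping in the first and last steps: one must check that the two substitutions ($E \equiv E'$ before the update, $E^{+} \equiv E'^{+}$ after it) genuinely go through. This relies on the commutativity of the scalar factors and of $SO(2)$ to make the $s$, $\hat s$ (resp.\ $\hat s^{+}$) terms cancel in the lever-arm slot, and on the fact that the rotation, position, and scale components of $\hat\chi^{+}$ and $\hat\chi'^{+}$ literally coincide. Once the identification $E \equiv \hat\chi'^{-1}\biggrouplaw\chi'$ is in hand, autonomy of the update is inherited for free from the left-invariant TFG theory, with no further estimate required.
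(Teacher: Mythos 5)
Your proof is correct, and it takes a cleaner, more structural route than the paper. The paper proves the proposition by brute force: it writes out $E^{X}_{n+1}=\tfrac{1}{s_{n+1}}\bigl(X_{n+1}-R_{n+1}^{-1}\hat R_{n+1}^{+}\hat X_{n+1}^{+}\bigr)$, substitutes the update rule \eqref{eq:update_old_var}, and observes that the result is a function of $E_n$ and $L(Z_n)$ only. You instead establish once and for all that the error \eqref{eq:auto_error} is the left-invariant TFG error $\hat\chi'^{-1}\biggrouplaw\chi'$ of the transformed system expressed in the original coordinates (the $\hat s$ cancellation in the lever-arm slot, $X'-\tfrac{\hat s}{s}R^{-1}\hat R\hat X'=\tfrac{1}{s}(X-R^{-1}\hat R\hat X)$, is exactly the computation that matters), note that \eqref{eq:update_old_var} is by construction the conjugate of the canonical update $\hat\chi'^{+}=\hat\chi'\biggrouplaw L(Z)$, and then import autonomy from the generic identity \eqref{eq:error_update_iekf}. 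What your approach buys is conceptual economy and robustness: the identification $E\equiv E'$ needs to be checked only once (and it holds for any pair of states, hence equally before and after the update), after which no component-wise bookkeeping is required; it also makes transparent that the result holds for an arbitrary gain function $L$. What the paper's computation buys is the explicit closed form of the updated lever-arm error, which is what one actually needs to linearize and implement the filter; your argument yields it too, as the $X$-slot of $L(Z)^{-1}\biggrouplaw E$, namely $E^{X,+}=E^{X}-\bigl((E^g)^{-1}(L_R,L_s)\bigr)*L_{X'}$ — note in passing that this differs by a sign and an inverse from the formula displayed in the paper's proof, which appears to be a typo that your generic derivation would have sidestepped. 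One small presentational remark: when you invoke $Z=h(E')$ you are implicitly using the compatibility of \eqref{eq:output_new_var} with $*_y$ established in the preceding subsection; it is worth stating that the relevant $h$ here is the output map of the \emph{transformed} system, so that $L(Z)$ is indeed a function of $E'=E$ alone.
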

\begin{proof}
    We simply need to compute
    $$E_{n+1}^X = \frac{1}{s_{n+1}}(X_{n+1} - R_{n+1}^{-1} \hat R_{n+1}^+ \hat X_{n+1}^+)$$
    Replacing the expressions using the equations above, we get
    \begin{equation*}
    E_{n+1}^X = E_n^X + ((E_n^R, E_n^s)^{-1}  (L_R, L_s)^{-1}) * L_X'
    \label{eq:error_update_X}.
    \end{equation*}
    \par \vspace{-1.5\baselineskip}
\end{proof}
\subsection{Discussion}

This simple enough set of examples sheds further light on the two-frame theory. The rationale of this theory is to have two frames, vectors defined in each, and a transformation group $G$ which transforms vectors expressed in the body frame to vectors of the fixed frame. In the subproblem of Section  \ref{sec:third_pb12}, it is clear that scalings must be included in $G$, as having a scale factor $s$ is identical to using different units (e.g. meters vs feet) in the body and fixed frames. Hence $G$ must act as $sR$. There are two frames, a transformation group from one to the other: We may apply the theory \cite{barrau2022geometry}. 

In Problem 3, by constrast, there are fundamentally 3 frames. The odometry measurements $\bodyu_n$ are vectors in the body frame that are as if  measured in different units than the lever arm $\bodyX_n$. This yields two different body frames, and one fixed frame. We do not have 2 frames and a single group that maps one to the other, as required by the theory. The change a variable for the lever arm $\bodyX_n$ allows for working with 2 frames only, as it brings  $\bodyu_n$  and  $\bodyX_n'$ in the same frame (this is as if a global change of units was applied to the body frame besides the rotation). The proposed change of variable thus appears fundamentally justified by physical considerations, and not just a ``trick".
    
One could also have directly computed the related group law, although it is hard to guess. However, the change of variable allows profiting from the properties of the TFG, and avoids carrying out a number of specific computations.

\section{Numerical comparisons}\label{numeric:sec}
Once group multiplication and group actions have been defined, one may follow the constructive design of IEKFs, see e.g. \cite{barrau2022geometry}. In the present case, though, we need to recall that the problem fits the IEKF framework only after a suitable change of variables.

\subsection{IEKF design for Problem 3}
 We denote the filter based on estimation error \eqref{eq:auto_error} as TFG-IEKF. Since the propagation is carried out via the dynamical model, see e.g., \cite{barrau2022geometry}, all we have to specify is the update step. It relies on the exponential map (see the online version {\footnotesize{\url{https://hal.science/hal-04691220}}}). If we consider a change of variable $X' = \frac{X}{s}$, such that $(\theta, s, x, X')$ suits the TFG framework. The TFG-IEKF update is computed using $(e_\theta, e_s, e_x, e_{X'}) = \exp(Kz)$ as follows:
\begin{itemize}
    \item $(\theta, s, x)^+ = (\theta + e_\theta, s \cdot e_s, x + R e_x)$;
    \item $X'^+ = e_{X'} + \frac{e_R^{-1}}{e_s}  X'$, which translates into $X^+ = s \cdot e_s \cdot e_{X'} + e_R^{-1} X$.
\end{itemize}
Let us compare with the imperfect IEKF and EKF respective updates. Write $\delta = (\delta_\theta, \delta_s, \delta_x, \delta_X) = KZ$. Then, for the imperfect IEKF we have $(\theta, x)^+ = (\theta, x) \exp_{SE(2)}(\delta_\theta, \delta_x)$ and $(s, X)^+ = (s + \delta_s, X + \delta_X)$. For the EKF, we simply have $\chi^+ = \chi + \delta$. Note, in particular, that only the TFG-IEKF guarantees that the scale stays positive.

 \subsection{Numerical experiment}
 The proposed non-linear autonomous error \eqref{eq:auto_error} is compared in a filtering framework with the imperfect IEKF \cite{barrau2015non} and the EKF formulations in challenging alignment experiments. The vehicle is modeled to first drive in circles with angular velocity of $7^\circ$/s, and then go straight, all at constant speed of 5m/s. This allows the lever arm to be fully observable. Angular and linear increments are received at 10Hz, and position measurements at 1Hz. They are polluted by noise of respective standard deviations $\sigma_\omega = 0.5^\circ$/s, $\sigma_{\bodyu} = 0.1$m/s and $\sigma_y = 1$m. The initial attitude error is sampled from a Gaussian with standard deviation $\sigma_{att}^0 = 100^\circ$ (first experiment), or $\sigma_{att}^0 = 200^\circ$ (second experiment), on 50 Monte Carlo runs each. This corresponds to large initial errors indeed. 
 
 Figure~\ref{fig:experiment}, top, displays the RMSE for the first case. Imperfect IEKF and TFG-IEKF behave likewise asymptotically, but the proposed filter better handles the first circling part. On the other hand, the EKF has troubles converging, which impacts the RMSE. For $\sigma_{att}^0 = 200^\circ$, the RMSE depends primarily on the presence of outliers, i.e. whether the filters converge, so we focus on this. For each MC run, the 3-$\sigma$ envelope and the yaw error are displayed, colored in blue if the filter  achieves convergence, in red otherwise. An estimate is deemed convergent if its error stays below the 3-$\sigma$ envelope after 20s, divergent otherwise. Table~\ref{tab:conv} gives the proportion of convergent trajectories.   It is clear that only the filter based on the autonomous error manages to converge at almost each run. Indeed, both the imperfect IEKF and EKF mostly fail. Notably, the  estimated scale $\hat s$ can become negative, ``trying" to compensate for a yaw error of $\pi$.

 \begin{figure}
     \centering
     \includegraphics[width=.7\columnwidth]{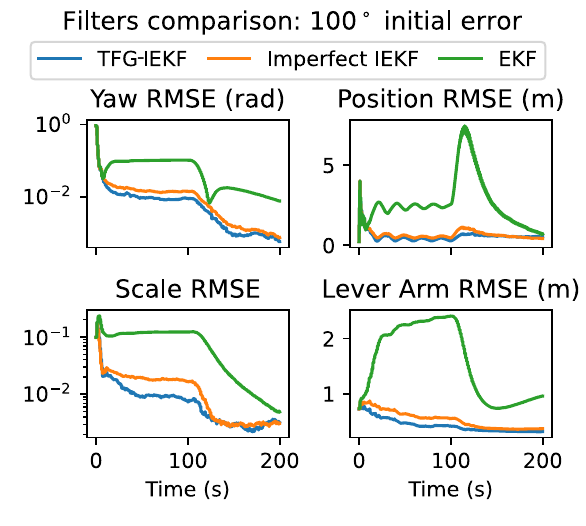}
     \includegraphics[width=.7\columnwidth]{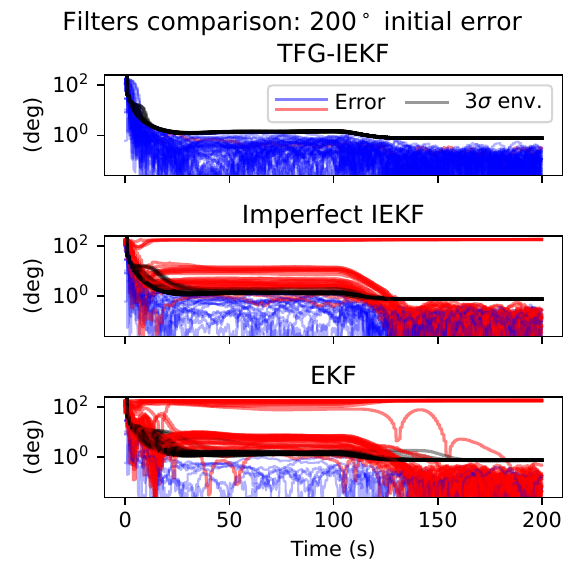}
     \caption{Results of the alignment experiments. Top: RMSE of the estimates for $\sigma_{att}^0 = 100^\circ$. Bottom: Yaw error for each MC run compared with the 3-$\sigma$ envelope for $\sigma_{att}^0 = 200^\circ$. Error curves are in blue if they stay below the envelope after 20s, and in red if they do not.}
     \label{fig:experiment}
 \end{figure}
\begin{table}[]
    \caption{Convergence rates of the filters for $\sigma_{att}^0 = 200^\circ$  }
    \centering
    \begin{tabular}{|c|c|c|c|}
        \hline
        Filter & TFG-IEKF & Imp. IEKF &  EKF \\
        \hline
        Convergence & 98\% & 34\% & 14\% \\
        \hline
        Divergence & 2\% & 66\% & 86\%\\
        \hline
    \end{tabular}
    \label{tab:conv}
\end{table}
\section{Conclusion}
In this work we presented, through a cascade of increasingly difficult navigation problems, how the two-frame group structure helps designing invariant Kalman filters. The first two problems were known, and recapped in a tutorial and unified way. The last one was shown not to fit into the TFG structure as it is. However, a suitable change of variable allowed for an invariant Kalman filter having autonomous error. The associated filter was shown to outperform the imperfect IEKF and standard EKF in terms of accuracy and convergence capabilities, avoiding local minima. 
The fact that the error remained autonomous while going back to the original variables is intriguing, and opens up for possible generalizations, and a larger application of the invariant filtering framework. 


\bibliographystyle{plain}
\bibliography{biblio}

\end{document}